\documentclass[letterpaper]{article}

\usepackage{natbib,alifeconf}  

%




\usepackage{amsthm}
\usepackage{amsfonts}
\usepackage{bm}

\usepackage{algorithm}
\usepackage{algorithmic}
\usepackage{hyperref} 

\usepackage{float}
\usepackage{caption}
\usepackage{newfloat}
\usepackage{listings}
\DeclareCaptionStyle{ruled}{labelfont=normalfont,labelsep=colon,strut=off} 
\lstset{%
	basicstyle={\footnotesize\ttfamily},
	numbers=left,numberstyle=\footnotesize,xleftmargin=2em,
	aboveskip=0pt,belowskip=0pt,%
	showstringspaces=false,tabsize=2,breaklines=true}
\floatstyle{ruled}
\newfloat{listing}{tb}{lst}{}
\floatname{listing}{Listing}

\newtheorem{definition}{Definition}
\newtheorem{theorem}{Theorem}


\title{GO-EA in the light of SGD}

\title{Evolutionary Algorithms in the Light of SGD:\\
Limit Equivalence, Minima Flatness, and Transfer Learning}
\author{Andrei Kucharavy$^{1,*}$, Rachid Guerraoui$^{1}$ \and Ljiljana Dolamic$^2$ \\
\mbox{}\\
$^1$IC department, EPFL, Lausanne, Switzerland \\
$^2$CYD Campus, armasuisse S+T, Thun, Switzerland \\
$^*$andrei.kucharavy@hevs.ch; Now at HES-SO Valais/Wallis, Sierre, Switzerland}

%
%

\begin{document}
\maketitle

\begin{abstract}
Whenever applicable, the Stochastic Gradient Descent (SGD) has shown itself to be unreasonably effective.
Instead of underperforming and getting trapped in local minima due to the batch noise, SGD leverages it to learn to generalize better and find minima that are good enough for the entire dataset. 
This led to numerous theoretical and experimental investigations, especially in the context of Artificial Neural Networks (ANNs), leading to better machine learning algorithms.
However, SGD is not applicable in a non-differentiable setting, leaving all that prior research off the table.

In this paper, we show that a class of evolutionary algorithms (EAs) inspired by the Gillespie-Orr Mutational Landscapes model for natural evolution is formally equivalent to SGD in certain settings and, in practice, is well adapted to large ANNs. We refer to such EAs as Gillespie-Orr EA class (GO-EAs) and empirically show how an insight transfer from SGD can work for them. We then show that for ANNs trained to near-optimality or in the transfer learning setting, the equivalence also allows transferring the insights from the Mutational Landscapes model to SGD.

We then leverage this equivalence to experimentally show how SGD and GO-EAs can provide mutual insight through examples of minima flatness, transfer learning, and mixing of individuals in EAs applied to large models.
\end{abstract}

\section{Introduction}

Over the last decade and a half, deep learning has achieved impressive progress\citep{lecun2015}. From image recognition \citep{ImageNet2012} to image and text synthesis \citep{Karras2018, GPT3}, a progressive increase in the size of ANN models, combined with an increase in the size and variety of datasets and new approaches, such as GANs or Self-Attention \citep{Goodfellow2014GANs, Transformer2017}, unlocked new capabilities - both expected and unexpected \citep{Antropic2022BiggerModelsUnlockPerformanceAndRacism}. However, one of the most fundamental aspects of such models - their training process - is still far from fully understood.

The ability of ANN models to solve large classes of problems is not exactly surprising. ANNs have been proven to be universal approximators that can fit any function \citep{Hornik1989}. What is surprising is that good approximations can reliably be found by trivial gradient descent, even if it is calculated on small batches of dataset samples - a procedure known as Stochastic Gradient Descent (SGD) \cite{LecunBengioBottou1998SGDbetter}. SGD can teach ANNs robust, well-generalizing features \citep{BottouLeCun2003, BottouBousquet2008}, even when models are overparametrized enough to memorize images \citep{TwoBengiosSharpGeneralization2017, SBengioMemorization2021}. While numerous other optimizers have been proposed and successfully applied - such as Adam \citep{Adam2014}, few remain as well-explored as SGD. 

Thanks to its simplicity and unreasonable effectiveness, SGD has been explored better than any other machine learning algorithm, both theoretically and experimentally. Among other things, we now understand how SGD optimization responds to learning rate adjustment \citep{jastrzkebski2018relation, hoffer2017train}, minibatch noise \citep{ Ueda2021strengthOfMinibatchNoise, Wu2020GeneralNoiseSGD}, or additional noise injection in case of large batches \citep{xie2020diffusion, zhu2018anisotropicnoise}. Several experimental and theoretical works made clear the importance of model over-parametrization for both the loss landscape smoothing \citep{LossLandscapesVis2018} and increasing the connectedness of minima \citep{NguyenConnectivity2021, NeuralTangentKernels2018}, allowing the SGD to avoid getting trapped in local minima and to train ANNs to recognize robust and generalizing features instead.

However, SGD applies only in a fully differentiable setting. It requires a transformation of the learning problem into a continuous form and can only train fully differentiable models with layers that can support gradient back-propagation. However, a fully differentiable setting is highly limiting. For instance, a stronger, discrete version of the wildly successful soft attention architecture \citep{AttentionBengio2015, Transformer2017} - hard attention \citep{HardAttention2014Google, HardAttention2015Bengio} - cannot be trained with SGD or other optimizers requiring a differentiable setting. 

These limitations led to increasing attention to approaches that can work in a non-differentiable setting. A prime example is reinforcement learning (RL) \citep{Sutton1991ReinforcementLearning}. While reinforcement learning achieved impressive results in some settings, notably strategy games \citep{AlphaGo2017, AlphaStar2019}, RL is more poorly understood and requires extensive hyper-parameter space sweeps for each new application \citep{Stutskver2017ESasRLalternative}.

These shortcomings led to a resurgence of interest in evolutionary optimization algorithms (EA). Significantly more stable than RLs \citep{Stutskver2017ESasRLalternative}, EAs were shown to scale well and, when supplied with sufficient computational power, to outperform reinforcement learning on a range of complex tasks \citep{Stutskver2017ESasRLalternative, 2017UberGeneticAlgos}. Unfortunately, the understanding of why EAs perform so well and how their performance could be improved has been limited.

Here, we show that a relatively simple class of EAs rooted in a formalization from population genetics - Gillespie-Orr Mutational Landscapes model - approximates well SGD in theory and practice. We call that class Gillespie-Orr Evolutionary Algorithms (GO-EA) and experimentally show how, thanks to them, insight can be transferred between SGD and the Mutational Landscapes model of evolution. We demonstrate it with a well-established MNIST digit recognition task \citep{LecunBengioBottou1998SGDbetter}. 

Interestingly, such an equivalence allows us a novel insight into the long-standing \textit{Flat Minima} hypothesis in Machine Learning. The Flat Minima hypothesis postulates that suggesting that SGD trains better generalizing ANNs by finding flatter minima, given that those are the only ones to be robust to the inherent SGD minibatch noise \citep{FlatMinima1994}. While this theory has found some empirical support \citep{FlatMinimaGoodfellow2014, Keskar2016, Chaudhari2017, LossLandscapesVis2018}, it also has counter-examples \citep{TwoBengiosSharpGeneralization2017, Ueda2022LocalMaximaSGD}. Here we leverage the equivalence between GO-EA and SGD to transfer results from the Mutational Landscapes model \citep{kucharavy2018}, suggesting that Minima Flatness has more to do with the redundancy of feature recognition rather than the model generalizability. We show that this has implications for transfer learning, an increasingly prevalent paradigm, where existing models are adjusted for a new application rather than re-trained from scratch \citep{2010TransferLeareningReview, Bengio2014TransferLearning}.

Specifically, our contributions are:
\begin{itemize}
  \item \textbf{Establishing a limit equivalence} between SGD and GO-EA class in the low learning rate limit
  \item \textbf{Presenting a population size effect} as equivalent to uniform anisotropic noise in SGD and perturbed gradient descent (PGD), suggesting a new role for neutral drift
  \item \textbf{Empirically validating hypotheses} that arise from such equivalence, namely with regards to\textit{ minima flatness}, \textit{transfer learning}, evolutionary algorithms \textit{hyperparameters}, and sampled parameter \textit{update vectors mixing}
\end{itemize}

\section{Evolution in Algorithms and Genetics}

\subsection{Evolutionary Algorithms}

Evolutionary algorithms for optimization and AI have been introduced by \citet{Fogel1966} and were directly inspired by evolution understood in a strictly Darwinian sense - as a pure mutation-selection loop looking for an optimum on a fitness landscape. As additional concepts were picked up from biological evolution, EA became progressively more complex, culminating in the Genetic Algorithm\footnote{Evolutionary algorithms often have multiple conflicting names. To avoid confusion, here we adopt the taxonomy presented in \cite{2015EvolutionStrategiesReview}, and following the lead of \cite{NeuroevolutionReview2020}, use Evolutionary Algorithm as a general term and reserve Genetic Algorithm name strictly to the algorithm presented in \cite{Goldberg1989, GeneticAlgorithm1992Holland}, aka including chromosomes and recombination.} \citep{Goldberg1989, GeneticAlgorithm1992Holland} - perhaps the most widely known machine learning algorithm before the ubiquitous success SGD.

However, the Genetic Algorithm was far from the last evolutionary algorithm introduced. In the late 1990s and early 2000s, several new evolutionary algorithms were proposed. They were inspired less by biological evolution and more by heuristics to accelerate the search. Notable examples are Enforced Sub-Populations (ESP) \citep{1997ESP}, Covariance Matrix Adaptation Evolution Strategy (CMA-ES) \citep{2001CMAES}, CooperativeSynapse Neuroevolution (CoSynNE) \citep{2008CoSyNE} or Natural Evolution Strategies (NES) \citep{2008NES}. However, their use remained limited, and they were rapidly eclipsed by the success of deep learning and reinforcement learning, especially when applied to ANNs, with neuroevolution remaining a relative niche domain \citep{Neuroevolution2008}.

\subsection{Population Genetics and Mutational Landscape}

Where evolutionary algorithms drew inspiration from biological evolution, population genetics instead sought to formalize and precisely quantify it. Its first challenge was Darwin's theory of natural selection itself. The small, gradual changes that nature was supposed to select from were incompatible with observed patterns of inheritance in most organisms, as discovered by Mendel. It was not until Sir Ronald Fisher introduced his Geometric Model of Evolution, almost 70 years later, that the paradox was resolved from a theoretical standpoint \citep{fisher1930geometric}. By representing fitness relationship to traits as a scalar field, where gene variants encoded specific points, this model represented evolution as a random walk trying to ascend a fitness peak, with steps following Mendelian patterns and trait changes looking Darwinian \citep{tenaillon2014FGMUses, orr2005review}. Shortly after, \citet{Wright1932} introduced a generalization of that model with numerous fitness peaks and valleys and coined the term of \textit{fitness landscapes} to represent it, which in the context of machine learning saw their fitness inverted into the loss and became known as \textit{loss landscapes}.

The Fitness Landscapes model did not stop there, however. \citet{kimura1968genetic} realized that survival was not only determined by fitness but also by chance. A fire in the forest could easily exterminate a population of deer that acquired a mutation allowing them to better digest grass, eliminating the improved trait by pure chance. This observation led to the introduction of \textit{neutral drift} as a counter-balance to natural selection and the formulation of the nearly neutral theory of molecular evolution \citep{ohta1992nearly}. By bringing in evidence from paleontology, \citet{gould1979spandrels} made it clear that evolution is not a steady process but rather occurs in rapid bursts shortly after the environment changes. If the environment does not change, neither do organisms inhabiting it, leading to living fossils, such as horseshoe crabs in the Delaware River delta, unchanged for the last 480 million years. This became known as the \textit{adaptive bursts chain} theory of evolution \citep{lande1986dynamics}.

However, despite its refinements, the geometric model of evolution still had one major issue - the biological reality. The discovery of DNA in the 1950s meant that genetic code was a long string, with mutations only affecting a letter or a word in it at a time. While theories of evolution representing it as such were developed - notably the string rewrite graph \textit{NK theory} \citep{kauffman1969metabolic, kauffman1987towards}, they looked nothing like scalar vector fields of geometric models and lacked the quantitative explanatory capabilities of the latter. This conundrum was resolved by \citet{gillespie1983simple} and \citet{orr2002population}. The former noted that within the adaptive burst theory evolution, the adaptation would occur only after the environment would change and would start with an organism that already had a genetic code mapping to a fitness maximum within the accessible genetic code space prior to the environment change. Hence it was starting the search for mutations improving its fitness from a genetic code with an already extreme fitness compared to the ensemble of all possible genetic codes. In turn, it meant that the fitness change could be described by an extreme limit distribution, leveraging the Fisher-Tippet-Gnedenko theorem \citep{gillespie1984molecular, fisher1928limiting, gnedenko1943distribution}. At this point, \citet{orr2002population, orr2006distribution}
showed that the adaptive walk in Fisher's Geometric Models belonged to the same limit distribution family; hence, both were formally equivalent. Finally, \citet{JoyceOrr2008} showed that the underlying classes of fitness distributions across code strings mattered little - the Geometric Model still represented most heavy-tailed or truncated fitness distributions well enough.

The resulting model became known as \textit{Mutational Landscapes models} \citep{orr2005review}, or Modern Fisher Geometric Model \citep{tenaillon2014FGMUses}. Remarkably, despite being explicitly developed for biological organisms, it is well-suited for any coding space search, with a strict equivalence to the biological setting when the adaptation occurs from an already well-performing code, such as in transfer learning or the final stages of model training.

\subsection{Prior work}

Unfortunately, to our knowledge, the equivalence between evolutionary algorithms and gradient descent algorithms has remained a relatively unexplored topic.

Closest to our approach, \citet{Stutskver2017ESasRLalternative} established an informal equivalence between Q-Learning and Policy Gradient \citep{Watkins1992QLearning, Wiliam1992PolicyGradient} and a type of evolutionary search algorithm (ES) - Scalable ES, in the context of reinforcement learning. The authors speculate that Q-Learning and Policy Gradient explore possible actions by perturbing the actions of a learning agent, ES perturbs the parameters of the ANN controlling the actor's action choice directly. However, the authors stop there and proceed to experimental investigations as to whether ES could solve the RL tasks they were interested in.

Similarly, there is a wealth of papers that establish in parallel an equivalence between a machine learning process and a physical process and between a biological evolutionary process and the same physical process. For instance, both \citet{Koonin2018Glassy} and \citet{LeCun2018Glassy} draw analogies with glassy systems for biological evolution and ANN training with SGD, respectively; while failing to acknowledge that the field of glassy systems has itself been inspired by NK models developed to explain natural evolution by \citet{kauffman1987towards}. 

However, to our knowledge, none formalized the direct equivalence between SGD and a class of EAs or connected it to population genetics.

\section{Central Theoretical Results}

\subsection{Gillespie-Orr Evolutionary Algorithms Class}

While the Mutational Landscape model of evolution is a general framework, here we will focus on a simplified version that is best suited for optimization tasks and theoretical analysis. Specifically, by noting $N$ is the population size, $s$ is a typical selection coefficient, and $\mu$ is the per-site mutation rate, we will be making the following assumptions:
\begin{enumerate}
  \item Haploid populations (single code evaluated for fitness);
  \item Under high selection ($Ns \gg 1$);
  \item In the low mutation limit ($N \mu < 1$);
\end{enumerate}

The main purpose of those assumptions is to ensure that a new code modification (mutation) is evaluated by itself and has the time to become universal in the evolving population on its own merits. To enforce it, we define the Gillespie-Orr Evolutionary Algorithms class (GO-EAs) as follows:

\begin{definition}[Gillespie-Orr Evolutionary Algorithm class]
Any parameter space search algorithm that evaluates the change in loss function $\mathcal{L}_{ \theta}$ upon update of model parameters ($\bm \theta$) with a random vector of perturbation ($\mathcal{L}_{ \theta +  \theta_{rand}}$), without aggregation with other loss function evaluations, and performing a greedy search based on such evaluation.
\end{definition}

For simplicity, in the case when model parameters $\theta$ are all real numbers, for convenience, we decompose the angular component of update $\theta_{rand}$ from its scalar component $\epsilon$ and, by abuse of notation, write it as $\epsilon \theta_{rand}$. Given that $\epsilon$ indicates how much the model parameters (aka code)  can change, we refer to it as the \textit{rewrite capacity}. Conversely, given that $\theta_{rand}$ indicates the direction of a potential optimization step, we call it \textit{update vector}. If $\mathcal{L}_{ \theta +  \epsilon \theta_{rand}}<\mathcal{L}_{ \theta}$, we call $\theta_{rand}$ a \textit{valid update vector}.

Similarly, the assumptions of the absence of aggregation between different loss function evaluations and haploidy are here for simplicity. In biological evolution, for polyploid sexually reproducing organisms, this assumption is relaxed by only considering mutations that spread throughout the entire population (sweep) and by counting the generations needed for that sweep as a single generation. Given that evolutionary algorithms are not constrained by molecular biology, we abstract this away through our definition. 

This means that Scalabe ES \citep{Stutskver2017ESasRLalternative}, CoSyNE \citep{2008CoSyNE}, NES \citep{2008NES}, or the Genetic Algorithm \citep{Goldberg1989} are not part of GO-EA class due to the aggregation of parameters coming from different fitness evaluations. Conversely, a simple greedy search algorithm described in Appendix Alg.\ref{basic_go_ea} is part of the GO-EA class, just as the algorithm proposed in \citet{2017UberGeneticAlgos}.

\subsection{In the Limit, GO-EA Converges to SGD in Mean}

Using the standard notation, let $f_{\bm \theta}(\cdot)$ be an ANN parameterized by $\bm \theta$, that maps inputs $\bf X=\{\bf x_i \}^M_{i=1} \in \mathbb{Z}_2^{n_x \times d_x \times M}$ to outputs $\bf Y=\{\bf y_i \}^M_{i=1} \in \mathbb{Z}_2^{n_y \times d_y \times M}$, where $\mathbb{Z}_2=\{0, 1\}$, $d_x$ and $d_y$ are dimensions of $\bm x$ and $\bm y$, $n_x$ and $n_y$ respectively the binary code length required to describe a single component of the vectors of $\bm x$ and $\bm y$ and $M$ the maximum number of inputs the network can encounter, with potentially $M=\inf$.

Let $\mathcal{L}_{\bm \theta}$ be the fitness function associated to  $f_{\bm \theta}$ on the $\bf X$ and $\bf Y$. A priori,  $\mathcal{L}$ is inaccessible because it requires evaluating all the possible input-output pairs. However, it can be estimated with a finite sample of inputs and outputs $\bf X_{samp}$, $\bf Y_{samp}$, giving us an $\hat{\mathcal{L}}_{\bm \theta}|_{{\bf X}_{samp}, {\bf Y}_{samp}}$, that we will shorten to $\hat{\mathcal{L}}_{\bm \theta}$.

Let $\mathcal{O}$ be a greedy optimization process, such that $\mathcal{O}(\bm \theta) = \bm \theta'$,  with a parameter change capacity $d$, such that $||\bm \theta' - \bm \theta||_p < d$, where $p \in \mathbb{N}$ and $\hat{\mathcal{L}}_{\bm \theta'}|_{{\bf X}_{samp}, {\bf Y}_{samp}} \geq \hat{\mathcal{L}}_{\bm \theta''}|_{{\bf X}_{samp}, {\bf Y}_{samp}}$ for any $\bm \theta''$ such that $||\bm \theta'' - \bm \theta||_p < d$.

\begin{theorem}[Low learning rate, high population]
\label{low_lr_high_n}
GO-EA update converges in mean towards SGD gradient as sampling population increases towards infinity ($N \rightarrow \infty$), assuming a locally smooth surface ($\forall \bm \theta_1, \bm \theta_2$, such that $|\bm \theta_1 - \bm \theta'|<l$ and $|\bm \theta_2 - \bm \theta'|<l$, and $\frac{|\hat{\mathcal{L}}_{\bm \theta_1} - \hat{\mathcal{L}}_{\bm \theta_2}|}{|\theta_1 - \theta_2|}<k$ where $k$ is the Lipschitz constant of the loss surface), a non-zero gradient and a low learning rate limit ($l k=d \rightarrow 0$).
\end{theorem}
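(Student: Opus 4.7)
The plan is to proceed in two stages: first, linearize the loss locally using the Lipschitz smoothness assumption; second, apply a concentration argument to the $N$-sample greedy selection on the sphere.

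\textbf{Stage 1 (linearization).} In the regime $d = lk \to 0$, any admissible perturbation $\epsilon \theta_{rand}$ with $\epsilon \le l$ and $\|\theta_{rand}\|=1$ lies in a neighborhood where differentiability at $\bm\theta$ together with the Lipschitz bound yields $\hat{\mathcal{L}}_{\bm\theta + \epsilon \theta_{rand}} - \hat{\mathcal{L}}_{\bm\theta} = \epsilon\, \nabla\hat{\mathcal{L}}_{\bm\theta} \cdot \theta_{rand} + R$, with remainder $|R| = o(\epsilon)$ uniformly over unit directions (the uniformity is where the Lipschitz bound on the loss enters). As $d \to 0$, ranking the $N$ candidates by loss decrease converges to ranking them by the linear form $\theta_{rand} \mapsto -\nabla\hat{\mathcal{L}}_{\bm\theta} \cdot \theta_{rand}$, so the GO-EA argmin reduces to an argmax of a linear projection onto the gradient.

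\textbf{Stage 2 (concentration over $N$).} Assuming the standard isotropic sampling $\theta_{rand}^{(i)}$ uniform on the unit sphere, set $Z_i = -\nabla\hat{\mathcal{L}}_{\bm\theta} \cdot \theta_{rand}^{(i)}$; GO-EA then selects $i^\star = \arg\max_i Z_i$. Since $\nabla\hat{\mathcal{L}}_{\bm\theta} \neq 0$, $Z_i$ has essential supremum $\|\nabla\hat{\mathcal{L}}_{\bm\theta}\|$, and for any $\eta > 0$ the spherical cap $\{Z \ge (1-\eta)\|\nabla\hat{\mathcal{L}}_{\bm\theta}\|\}$ carries strictly positive surface measure, so $\mathbb{P}[\max_i Z_i \ge (1-\eta)\|\nabla\hat{\mathcal{L}}_{\bm\theta}\|] \to 1$ as $N \to \infty$. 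Cauchy-Schwarz saturation then forces the unit vector $\theta_{rand}^{(i^\star)}$ to concentrate on $-\nabla\hat{\mathcal{L}}_{\bm\theta}/\|\nabla\hat{\mathcal{L}}_{\bm\theta}\|$. Taking expectations over the sample, $\mathbb{E}[\epsilon\, \theta_{rand}^{(i^\star)}] \to -\epsilon\, \nabla\hat{\mathcal{L}}_{\bm\theta}/\|\nabla\hat{\mathcal{L}}_{\bm\theta}\|$, which agrees with the SGD step under the identification $\eta_{SGD}\|\nabla\hat{\mathcal{L}}_{\bm\theta}\| = \epsilon$; this is the claimed convergence in mean.

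\textbf{Main obstacle.} The delicate point is the joint control of the two limits: the linearization error is $o(\epsilon)$ as $d \to 0$, while the gap between the top and second-best $Z_i$ typically shrinks in $N$, so for a finite-$N$ quantitative result one would need extreme-value estimates for the uniform distribution on the sphere in $\dim(\bm\theta)$ dimensions, yielding a dimension-dependent rate. For the asymptotic statement as written, the cleanest route is the iterated limit: send $d \to 0$ first at fixed $N$ (so the linear ordering is exact and the argmin of the loss is exactly the argmax of the projection), then $N \to \infty$; the expected selected update then converges to the SGD direction. The non-zero gradient hypothesis is essential here — it is what makes $Z_i$ non-degenerate so that the spherical-cap argument has a positive base measure to concentrate on.
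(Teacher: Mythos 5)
Your proposal is correct and follows essentially the same route as the paper's proof: linearize the loss under the low-learning-rate limit $d = lk \to 0$ so that the greedy ranking reduces to ranking by the projection onto the gradient, then argue that as $N \to \infty$ the best of $N$ isotropic samples concentrates on the gradient direction, giving convergence in mean. If anything, your Stage~2 is more careful than the paper's version, which asserts ``$\forall \epsilon > 0$, $\exists N$ such that $|\theta''_N - \theta'| < \epsilon$'' deterministically for what is a random quantity, whereas your positive-measure spherical-cap argument with $\mathbb{P}[\cdot] \to 1$ and your explicit discussion of the order of the two limits supply exactly the probabilistic content that step needs.
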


\begin{proof}
Let $\bm \theta_0$ be the starting parameters, $\bm \theta'$ be the parameter found by a step of SGD, and $\bm \theta''$ be the parameter found by a GO-EA process for a single minibatch of SGD.
Given that the loss surface is locally smooth $\nabla \hat{\mathcal{L}}_{\theta_0}$ exists and $\bm \theta' = \bm \theta_0 + l \nabla \hat{\mathcal{L}}_{\theta_0}$, where $|\nabla \hat{\mathcal{L}}_{\theta_0}|=k$ (since $\mathcal{L}$ is fitness, we are looking to ascend the gradient, as opposed to the loss where we look to descend it, and $+$ becomes a $-$). Given the small learning rate approximation, we can ignore higher order terms and $\bm \theta'$ is the argmax of $\hat{\mathcal{L}}_{\theta}$, $\forall \theta$ such that $|\theta-\theta_0|<l k = d$. Because of linearity of the exploration space, $\forall \theta_1,\theta_2$ such that $\hat{\mathcal{L}}_{\theta_2} > \hat{\mathcal{L}}_{\theta_1}$, $|\theta_1-\theta''|<|\theta_2-\theta''|$. 

Let $\bm \theta''_N$ be the best parameters found by $\mathcal{O}$ with a population $N$. Because of the above, we have $\forall N>0$,  $|\theta''_{N+1}-\theta'|\leq|\theta''_N-\theta'|$. Similarly, given that $k l = d$, $\forall \epsilon > 0$, $\exists N$ such that $|\theta''_N-\theta'|<\epsilon$. In turn, this implies that $lim_{N \rightarrow +\infty}E(|\theta''_N-\theta'|)=0$, which is a convergence in mean.
\end{proof}

\subsection{Relaxing Limit Constraints}

The low learning limit used above is known as the continuous-time approximation and has been used to obtain several theoretical results regarding SGD learning. However, large learning rates have been shown to be critical for SGD generalization \citep{Bengio2018WalkWithSGD, Ueda2021strengthOfMinibatchNoise} and the infinite population size is neither realistic nor necessary in practice.

 The minibatch noise in SGD has been shown to play an essential role in its ability to teach well-generalizing models, even if the noise distribution matters little \citep{Wu2020GeneralNoiseSGD}. One of the approaches that have been used to emulate the SGD in the case where the model and hardware allow for batches that are too large is artificial noise injection \citep{Zhou2019,  Bach2022Anticorrelatednoise}, to the point where it is possible to recover generalization properties of SGD with large batches \citep{geiping2021stochastic}.

Assuming that for large batch size, the SGD parameters update vector is close to the one of GD, and assuming that the random sampling process is anisotropic, we can easily calculate the probability of randomly sampling a vector $\theta_{rand}$ within an angle $\alpha$ of the GD update vector. If the sampling is uniform, the chance to sample such a vector would be equal to the ratio of the area of the cap of a hypersphere in dimension $n=dim(\theta)$ delimited by the colatitude angle $\alpha$ relative to the whole hypersphere surface area. Fortunately for us, this is a well-known function, mapping to the normalized incomplete beta functions $I_x(a,b)=\frac{B(x;a,b)}{B(a,b)}=\frac{\int_{0}^{x} t^{a-1} (1-t)^{b-1} \,dt }{ \int_{0}^{1} t^{a-1} (1-t)^{b-1} \,dt }$  \citep{hypercapref}. The final closed form that can be used for the estimation is
$f=\frac{1}{2} I_{sin^2\alpha}(\frac{n-1}{2}, \frac{1}{2})$.

\subsection{Improvement Probability in Transfer Learning}

In the context of fine-tuning, we expect to start with a model $f_{\bm \theta_0}(\cdot)$ parameterized so that it already performs well on all the sample tests drawn from the distribution it was used to train with - aka $\forall ({\bf X}_{samp}, {\bf Y}_{samp}) \subset \bf X \times \bf Y$, 
$\mathbb{P}(\hat{\mathcal{L}}_{\bm \theta_0}|_{{\bf X}_{samp}, {\bf Y}_{samp}} \sim \max_{\bm \theta}{\hat{\mathcal{L}}_{\bm \theta}|_{{\bf X}_{samp}, {\bf Y}_{samp}}}) \sim 1$. Formally, transfer learning consists in finding a new transfer parametrization $\bm \theta_T$, so that $\forall ({\bf X}_{samp}, {\bf Y}_{samp}) \subset \bf X 	\cup \bf X' \times \bf Y 	\cup Y'$,
$\mathbb{P}(\hat{\mathcal{L}}_{\bm \theta_T}|_{{\bf X}_{samp}, {\bf Y}_{samp}} \sim \max_{\bm \theta}{\hat{\mathcal{L}}_{\bm \theta}|_{{\bf X}_{samp}, {\bf Y}_{samp}}}) \sim 1$, where the $\bf X'$ and $\bf Y'$ are new domains application of the model.

Assuming $|\bm X|>>|\bm X'|$ and $|\bm Y|>>|\bm Y'|$, since otherwise, transfer learning would be equivalent to model re-training, the model is already performing well on the transfer model and the vast majority of the parameters within rewrite capacity $d$ of $\bm \theta_0$ would be deleterious or neutral, meaning that the parametrizations offering improvement would be distributed according to the generalized Pareto distribution \citep{Pickands1975, JoyceOrr2008}, which in the case of Gumbel domain of attraction would result in an exponential distribution  of fitnesses $\bm s = (s_{1}, ..., s_{i-1})$ where $s_j=\hat{\mathcal{L}}_{\bm \theta_j}|_{{\bf X}_{samp}, {\bf Y}_{samp}}$, the $j^{th}$ best parametrization of better parametrizations and a probability to reach the better parametrization $\bm \theta_j$ of rank $j$ in the neighborhood from a parametrization $\bm \theta_i$ of the rank $i$ of 
$ \mathbb{P}_{i,j}(\bm s) = \frac{s_j}{\sum^{i-1}_{k=1}s_k}$. In other terms, with finite populations, GO-EA sampling the parametrization neighborhood of the current optimum $\bm \theta_i$ will find advantageous model code rewrites with the probability that is inverse to the exponential probability of the difference between the loss associated to $\bm \theta_i$ and smallest possible loss within the edit distance budget.

\section{Hypotheses based on central results}

\subsection{Overparameterized Setting}

Previously, we demonstrated a formula to calculate the chance of random search finding a good approximation of the GD vector. However, if we visualize that function in different dimensions (Appendix Fig.\ref{fig:beta_inc_sweep}), we see that coming even within 30 degrees of the GD updates with GO-EA is unrealistic in any dimension above ~100. Modern ANNs with thousands of parameters on the lower end would require sampling populations too large for any practical use.

However, most current ANN models are highly over-parametrized to stabilize their learning. The over-parametrization has been shown to smooth the loss landscape \citep{LossLandscapesVis2018} and connect minima \citep{NguyenConnectivity2019, NeuralTangentKernels2018}, allowing most strategies performing gradient descent to arrive at an acceptable minimum. Empirical investigations into the update vectors of SGD in this context \citep{Bengio2018WalkWithSGD} suggest that SGD updates are rarely aligned, can be orthogonal, and often point in opposite directions due to the fact that almost all of the points in the loss landscape model passes through are saddle points.

If this is indeed the case, then numerous vectors are "valid" in the sense that they correspond to updates that could result from SGD minibatches from the training dataset that would still allow convergence. At this point, random sampling only needs to land close to a "valid" update vector, effectively decreasing the required sampling population size. Informally, we expect random sample vectors that lead to a lower large batch loss to be close to SGD update components that do not cancel out and are not that rare. In fact, in the most extreme case, when SGD minibatch updates are orthogonal, the curse of the dimension is lifted, and the sampling size required to find a valid update is divided by the number of minibatches. Intuitively, large and diverse datasets applied to overparametrized models will lead to a reasonably fast convergence, even with small search populations. 

Because of that, we hypothesize that the training hyperparameters from SGD are directly translatable to the hyperparameters of the GO-EA class algorithm and allow us to train a model with GO-EA while using a relatively small number of samples per step.

We validate this hypothesis by first verifying that for our model ANN training task, SGD update vectors are indeed highly dispersed and showing that a basic GO-EA algorithm (Appendix Alg.\ref{basic_go_ea}) with a small population can efficiently train a model with hyperparameters copied over directly from SGD.

\subsection{Minima Flatness as Error Correction Redundancy}

The convergence of SGD training to a flat minimum for a model is believed to be one of the conditions for the model training stability \citep{LossLandscapesVis2018}. The minima flatness was assumed to be connected to their generalization abilities through the minimal coding length of the model \citep{FlatMinima1997, FlatMinimaGoodfellow2014}. However, recent evidence argues to the contrary \citep{TwoBengiosSharpGeneralization2017, IdentityCrisisBengio2020, MulayoffM20}. Within the theory of evolution, the flatness of the fitness peak is commonly associated with the tolerance to the neutral drift - aka error correction capabilities. By using this analogy, we suggest that, just like in the context of the evolution, the flatness of the loss function minimum in ANNs optimized through SGD is determined by the redundancy of the features used by the trained ANN to recognize patterns in the target data. 

This intuition seems to be consistent with empirical observations about the loss function minima flatness. Architectures that provide the model with the means to encode redundant features, such as wide hidden layers or skip-forwards connections in deep ConvNets, contribute to making the loss landscape minima flatter \citep{LossLandscapesVis2018}. Similarly, drop-out regularization \citep{DropOut2014}, forcing the ANNs to learn redundant, error-correcting codings, seems to flatten minima as well, along with the smaller batches, which can contain a larger proportion of samples that defy the heuristics that the ANN has learned before encountering them \citep{FlatMinimaGoodfellow2014}. This hypothesis, in particular, goes against the suggestion that for EAs, the search population scales only with latent dimension, \citep{Stutskver2017ESasRLalternative}. To allow robustness through redundant error correction, the actual parameter dimension matters. We validate this intuition experimentally.

\subsection{Flat Minima and Transfer Learning}

Building on the hypothesis presented above, if the minima flatness is indeed related to the classification robustness and error correction, we expect models that learned a variety of error-correcting representations of training data to not be able to transfer those representations without training on the new data. 

Intuitively, they rely on parallel redundant subpaths traversing ANN layers to detect redundant relevant features present in the training dataset. With only some of those features present in the dataset on which the transfer task is performed, their error correction property could interfere with the corresponding output without an expected degree of redundant detection. Similarly, we do not expect flatter minima to accelerate transfer learning.

While some minima sharpening is observed experimentally during transfer learning, the results in the model we used are inconclusive, and more investigation is warranted. As for the speed of transfer learning, our experiments suggest flatter minima beyond base performance improvement do not accelerate that due to redundant coding. Experimental results supporting this hypothesis are provided further and in the Appendix.

\subsection{Update Vector Mixing is Unnecessary}

One of the prominent features of most recent Evolutionary Algorithms, ranging from the Genetic Algorithm and NES to the Scalable ES, is the mixing of different vectors that were used to sample the loss landscape to generate a new step. This approach is based on the intuition that vectors found by sampling are approximations of a "true" empirical gradient descent vector, and by combining them, the empirical gradient can be better approximated. 

Our formalization of GO-EA suggests that this is likely not the case. Valid update vectors are unlikely to be aligned and are potentially orthogonal, meaning that averaging them out is counter-productive, in the same way as increasing batch size is in SGD. There is a priori no reason why any interpolation between valid update vectors in EA would be a valid update vector itself, let alone result in a lower loss than either of the valid update vectors. In fact, empirical studies have shown that minima interpolations tend to perform poorly \citep{LossLandscapesVis2018}. Similarly, the justification of the Genetic Algorithm's chromosome cross-over - allowing the beneficial mutations to combine and eliminate deleterious ones - does not apply in the setting where the change of generations is not mandatory, and hence Muller's Ratchet cannot occur \citep{lynch1993mutational}. The GO-EA class defined here goes around the problem altogether by evaluating only a single modification to the model parameters at a time. We observe \& validate the hypothesis experimentally.

\section{Experimental results}

Appendix, supplementary figures, and the code used for experiments presented here are available from the GitHub repository of the project - \href{https://github.com/chiffa/ALIFE2023_GOEA-SGD}{https://github.com/chiffa/ALIFE2023\_GOEA-SGD}.

\subsection{Model Used}

In order to perform numerical experiments, we used a convolutional neural network (ConvNet) learning to recognize digits in the MNIST dataset \citep{LecunBengioBottou1998SGDbetter}. It is a well-established model and a textbook use case for SGD, chosen to minimize the chance of unexpected edge cases interfering with our experimental results.
The detailed ANN architecture and hyperparameters are available in the Appendix.
To measure the flatness of minima and smoothness of loss landscapes consistently with prior work, we performed a spectral normalization of each layer so that a perturbation along a random axis would correspond to the local robustness of the model. This approach is strictly equivalent to the filter-wise normalized directions proposed in \citet{LossLandscapesVis2018}.

For clarity, we are using the following abbreviation for hyperparameters and architecture models: batch size as B, drop-out as DO, drop-out on inputs as DI, with two architectural parameters: latent maps in the first layer (LM) - all subsequent follow a predefined ration, and linear width (LW), determining the number of neurons in the hidden linear layer. For flatness experiments, we use three stereotypical settings: "Robust Wide" (DO:0.25; DI:0.1; LM:16; LW:48; B:4) with high loss landscape smoothness/flat minima; "Brittle Narrow" (-DO; -DI; LM:4; LW:12; B:128) with low loss landscape smoothness; and "Brittle Wide" (-DO; -DI; LM:16; LW:48; B:128), added to make sure the instability of Brittle Narrow was not due to the lack of latent features/available feature encoding space. We confirmed the model validity by replicating results from \citet{LossLandscapesVis2018}, as described in the Appendix, notably the Appendix Fig.\ref{flat_pre_fine_tune}. The "Robust Wide" model indeed maps to flatter minima, whereas both "Brittle" models map to sharper minima.

\subsection{Modeling the transfer learning}

We used the transfer learning setting to evaluate the trained model's generalization ability. We expect that a model that generalizes better would be able to perform better when encountering new data or at least leverage more general features it learned to learn new data faster. We do not expect such abilities from models with simply a more redundant feature encoding, allowing us to differentiate the two in the context of minima flatness.

For that, we performed partial training dataset occlusion, where classes in the training dataset were occluded with respect to the model and loss calculation. In the transfer learning phase, the occluded categories were revealed and included in the loss computation.


\subsection{SGD Minibatch Noise is High}

To evaluate the minibatch noise for different batch sizes, we trained with occlusion the transfer model with default hyperparameters (-DO; -DI; LM:8; LW:12; B:32), froze its parameters and evaluated the angle between updates resulting from different batch sizes, ranging from 1024 to 4, shown on Fig.\ref{fig:empirical_sgd_angles}. As we hypothesized, the batch noise increases as batch size decreases, with a modal angle of ~80 degrees for minibatches of 4 samples. Similarly, large batches (1024) are collinear and approximate GD well. Given that the base model is approximately 10k parameters, 60k samples MNIST training dataset provides approximately 15k different sample batches of 4, suggesting that a small sampling population of 10-100 individuals would be sufficient.

\begin{figure}[H]
\vskip 0.2in
\begin{center}
\centerline{\includegraphics[width=0.85\columnwidth]{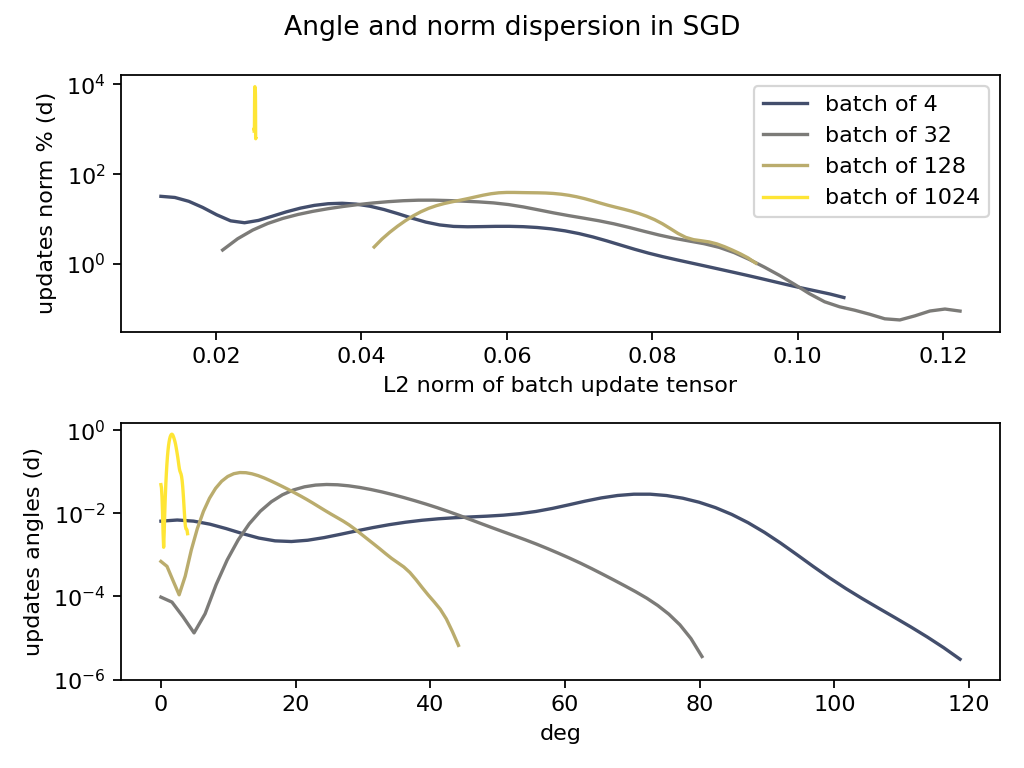}}
\caption{Empirical evaluation of angles between possible SGD update vectors for the same network state and data}
\label{fig:empirical_sgd_angles}
\end{center}
\vskip -0.2in
\end{figure}

\subsection{GO-EA Trains Efficiently with\\SGD Hyperparameters}

\begin{figure}[h]
\vskip 0.2in
\begin{center}
\centerline{\includegraphics[width=0.85\columnwidth]{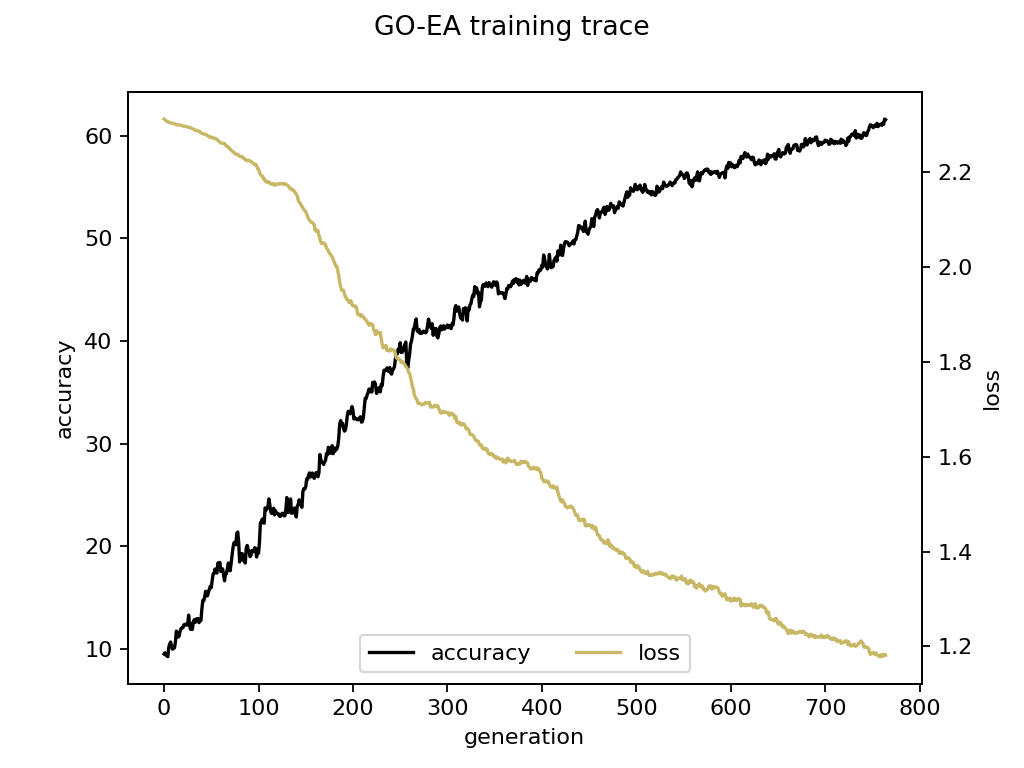}}
\caption{Loss and accuracy of the 9854 parameters ConvNet trained on MNIST dataset with a sampling population of 20 trained by a basic GO-EA class algorithm.}
\label{go_ea_training}
\end{center}
\vskip -0.2in
\end{figure}

We ported the hyperparameters and model used with the SGD optimizer directly to a simple algorithm in the GO-EA class (Appendix Alg.\ref{basic_go_ea}) and chose the sampling population size of 20, as per the previous section. The model could train rapidly, achieving an accuracy of over 60\% after 800 sampling steps (Fig.\ref{go_ea_training}). Further hyperparameter tweaking had little effect on training speed, with the sampling population size having only a moderate effect (Appendix Fig.\ref{go_ea_sweep}), suggesting that SGD hyperparameters did indeed transfer well.

\subsection{Flat Minima don't Generalize Better,\\They are More Robust}

After training the three minima flatness stereotype models according to the transfer learning model, we saw a significant difference in the flatness of their eight classes minima (Appendix Fig.\ref{flat_8_cl}). Models with flatter minima do not seem to generalize better (Appendix Fig.\ref{flat_fine_tune_speed}), given that they do not perform better on new data, nor do they undergo transfer learning faster beyond better performance on already known classes. We observe, however, significantly more redundant feature encoding, making them more resilient to noise in input images (Fig.\ref{rob_gen}).

\begin{figure}[H]
\centering
\includegraphics[width=0.9\columnwidth]{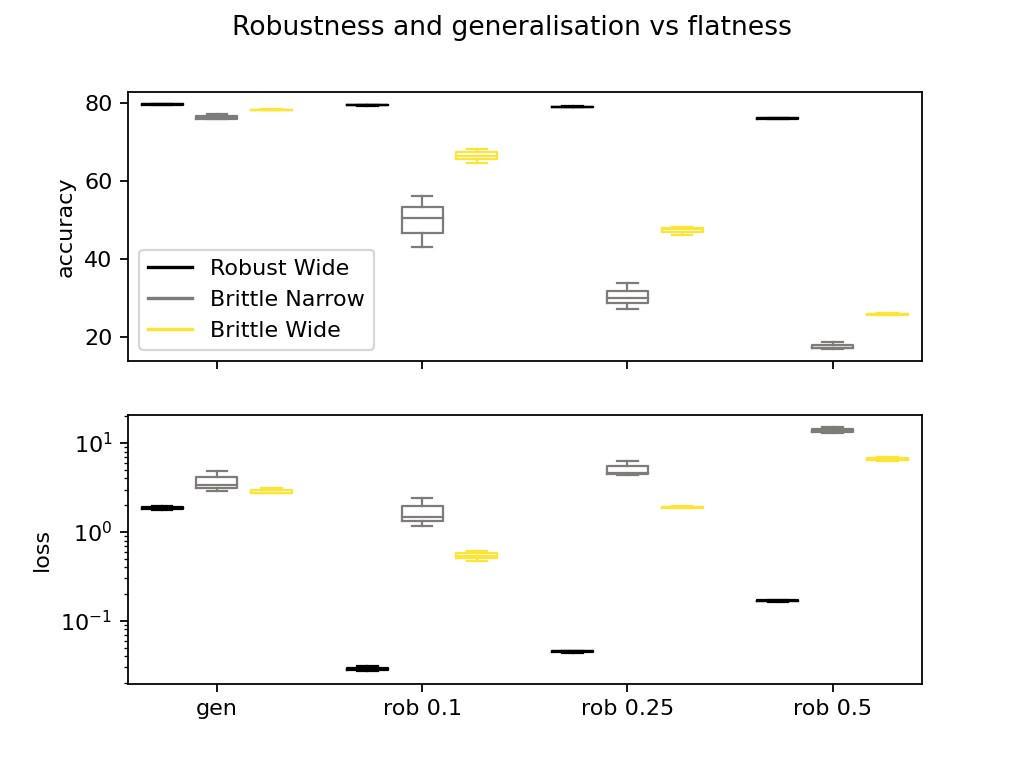} 
\caption{Flatter Minima models are more noise resistant but do not generalize better. Archetype models pre-trained on eight classes with occlusion generalization on the whole dataset (gen) as well as robustness to 10,25, and 50\% source image and feature corruption (rob 0.1, 0.25, and 0.5).}.
\label{rob_gen}
\end{figure}

\subsection{Transfer Learning Conforms to\\Mutational Landscapes Models}

\begin{figure}[h]
\centering
\includegraphics[width=0.85\columnwidth]{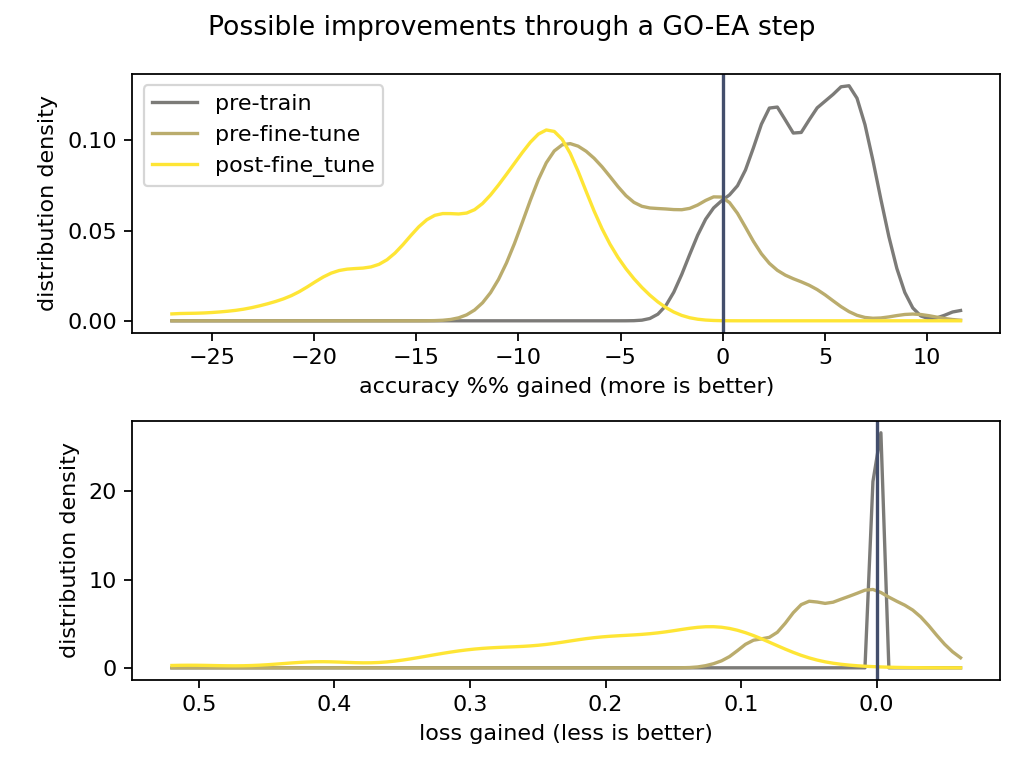} 
\caption{Probability of random update step being beneficial or deleterious as well as the magnitude of effect in accuracy (top panel, right is better) and in loss (bottom panel, left is better)}.
\label{fitness_distro_gain_fine_tune}
\end{figure}

To verify that ANNs trained with SGD conformed to the Mutation Landscape Model during the transfer learning process, we performed a random axis sweep of mutations with an edit distance of the order of magnitude of the standard deviation of the norm of weights in each layer. As shown in Fig.\ref{fitness_distro_gain_fine_tune}, it indeed does. We also observe that in conformity to \citet{orr2005review}, at random initialization, around 50\% of directions result in improvement (assumption of NK models, unrealistic in population genetics). In contrast, at the optimum, all mutations are deleterious, and the start of adaptive burst (start of transfer learning) conforms perfectly to the Mutational Landscapes Model (cf Fig.3 in \citet{orr2005review}).

\subsection{Update Vector Mixing is Likely Undesirable}

In conformity with our hypothesis, valid update vectors sampled by EA are orthogonal and indistinguishable from a pair of random sample vectors with a Kolmogorov-Smirnov test between the two. Fig.\ref{orthogonal_angles} gives distributions of both in different settings and the p-value of the two-sample KS test for angles between random update vectors and between update vectors resulting in a better loss.

\begin{figure}[h]
\centering
\includegraphics[width=0.85\columnwidth]{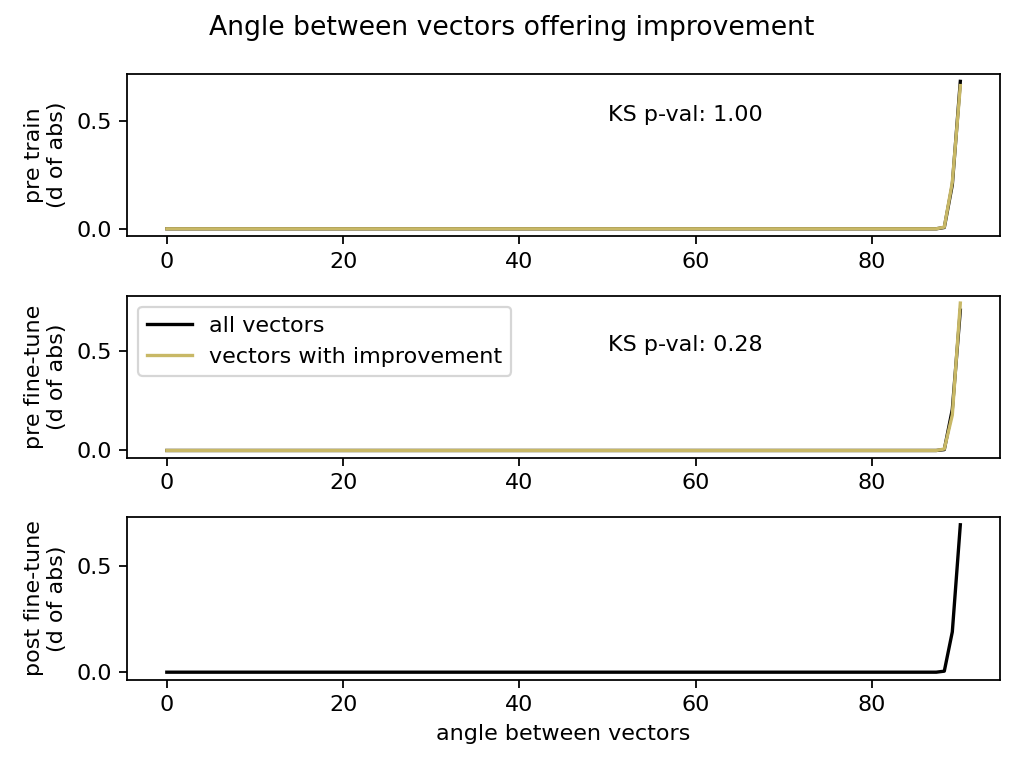}
\caption{Distribution of angles between randomly sampled update vectors leading to better fitness (yellow) vs. all randomly sampled update vectors (black), from random initialization, transfer learning start point, and at optimum (panels 1-3, respectively)}.
\label{orthogonal_angles}
\end{figure}

\section{Discussion}
In this paper, we define a new class of Evolutionary Algorithms - GO-EA, and establish a formal equivalence between them and SGD, both in the limit and in a more realistic setting. We then empirically test hypotheses arising from such equivalence for several well-established problems in machine learning, namely flat minimas, transfer learning, and applicability of EAs to modern highly over-parametrized ML models, demonstrating the advantage of the simpler GO-EA class and partially explaining the past success of GO-EA class algorithms in neuroevolution on hard tasks \citep{2017UberGeneticAlgos}.

However, this equivalence opens up a possibility to formalize and transfer other insights between machine learning and the theory of evolution. For instance, results by \citet{kucharavy2018, tenaillon2007quantifying} suggest that it is possible to evaluate the latent dimension of a problem directly. Conversely, insight into loss landscapes potentially can be translated back to the theory of evolution to explain adaptation, e.g., the feasibility of evolutionary traps to avoid drug resistance \citep{chen2015EvoTraps}.

Finally, leveraging the existing knowledge regarding SGD in a non-differentiable setting opens up new possibilities for training large models with non-differentiable layers. Specifically, hard attention is generally considered superior to soft attention, which was popularized by the Transformer and is underlying the ubiquitous Large Language Models (LLMs) \citep{HardAttention2015Bengio, Transformer2017}. Allowing hard attention in LLMs through GO-EAs has the potential to further the progress in that field.

\section{Acknowledgements}
We would like to thank the Cyber-Defence Campus, armasuisse W+T, VBS for the Distinguished CYD Post-Doctoral Fellowship to AK (ARAMIS CYD-F-2021004), as well as Fabien Salvi and France Faille (EFPL) for their technical and administrative support and the anonymous reviewers for their useful feedback.

\footnotesize
\bibliographystyle{apalike}
\bibliography{example} 

\newpage

\section{Appendix\\Evolutionary Algorithms in the Light of SGD:\\
Limit Equivalence, Minima Flatness, and Transfer Learning}

This is the appendix to the "Evolutionary Algorithms in the Light of SGD:
Limit Equivalence, Minima Flatness, and Transfer Learning" paper currently in review.

\section{Probability of Randomly Sampling a Vector Close to a Target Vector}

Assuming the sampling is performed uniformly on a unit sphere, the probability to sample a vector within $\alpha$ degrees from a desired vector follows a regularized Beta-Incomplete function $\frac{1}{2} I_{sin^2\alpha}(\frac{n-1}{2}, \frac{1}{2})$. To visualize this function, we plotted the probability of a random sampling vector to land within $\alpha$ degrees from a reference vector as a function of angle $\alpha$ and dimension $n$ on the Appendix Fig.\ref{fig:beta_inc_sweep}.

\begin{figure}[htbp]
\vskip 0.2in
\begin{center}
\centerline{\includegraphics[width=\columnwidth]{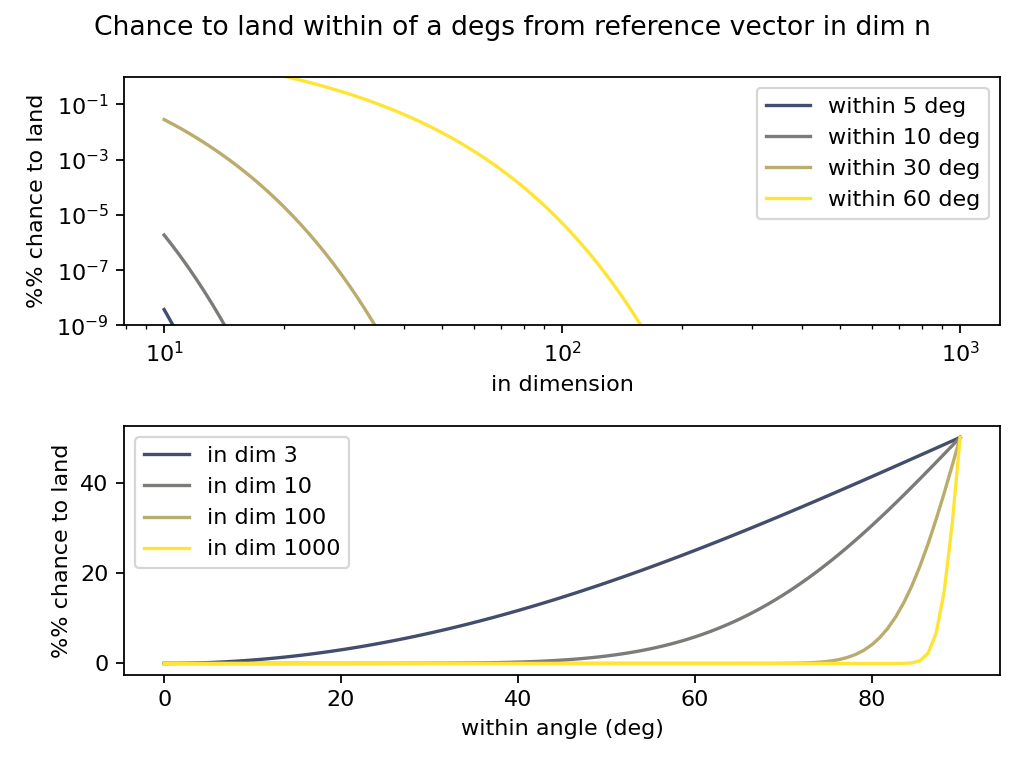}}
\caption{The probability (\%) to land within $\alpha$ degrees from the reference vectors as a function of dimensions number for fixed angles and as a function of angles for fixed dimensions}
\label{fig:beta_inc_sweep}
\end{center}
\vskip -0.2in
\end{figure}

\section{A Trivial Algorithm in the GO-EA Class}

A trivial example of an algorithm in GO-EA class is fixed population greedy random search, formally referred to as ($1,\lambda$)-Evolutionary Search in literature and also known as Evolutionary Strategies. The pseudo-code for the exact algorithm we used is presented in Appendix Alg.\ref{basic_go_ea}. Namely, the version of GO-EA we use does not contain any drift or noise.

\begin{algorithm}[htbp]
  \caption{Fixed population greedy random search}
  \label{basic_go_ea}
  \textbf{Input}: loss function $\mathcal{L}_{\bm \theta}$, starting parameter $\theta_0$\\
  \textbf{Parameter}: update distance $\epsilon$, population size $N$, generations $M$\\
  \textbf{Output}:  optimal parameters found $\theta_{fin}$\\
\begin{algorithmic}[1]
  \STATE Initialize $ \theta = \theta_0$
  \FOR{m in [1, .., M]}
      \FOR{n in [1, ..., N]}
          \STATE choose random update direction $\theta_{rand}$\\
          \IF{$\mathcal{L}_{\theta + \epsilon \theta_{rand}} <\mathcal{L}_{\theta}$}
            \STATE $\theta = \theta + \epsilon \theta_{rand}$.
          \ENDIF
      \ENDFOR
  \ENDFOR
  \STATE return $\theta_{fin}=\theta$
\end{algorithmic}
\end{algorithm}

\section{Neural Network Architecture}

Our ConvNet has 9854 trainable parameters with specific architecture of 784-\textbf{[8C5-P2]}-\textbf{[16C5-P2]}-\textbf{[8C3]}-\textbf{[4C3]}-\textbf{[196L24]}-\textbf{[24L10]}-10. \textbf{8C5} denotes a convolution layer with an 8 feature map using a 5x5 filter with stride 1 and padding 2, \textbf{P2} is max polling with 2x2 filter and stride 2, and \textbf{24L10} is a linear layer mapping from 24 to 10 channels. Every bloc (\textbf{[]}) is followed by a ReLU activation function, and a drop-out normalization is applied to it (10\% by default, abbreviated to DO:0.1), except for the last output layer. The input image is partially occluded by using a 10\% drop-out on input (DI:0.1). The width of the layers is controlled by two key variables: latent maps in the first layer (LM:8) with subsequent ones following a predefined ratio, and linear width (LW:24), determining the number of neurons in the hidden linear layer.

In SGD mode, the model is trained using cross-entropy loss and SGD optimizer with a learning rate of 0.002 and no momentum, with a batch size of 32 (B:32) and 10 epochs (E:10) unless specified otherwise. The abbreviations above are used in figures, with \textbf{x} or \textbf{/} denoting the multiplication or division of hyperparameters by a factor (e.g. LWx2) compared to the reference and \textbf{:} defining parameter being set to a value (e.g. DO/DI: .0/.0). Finally, \textbf{-} means the parameter was set to 0 (e.g. -DO). All figures can be re-generated from code and run records provided, including in greyscale-uniform colormap.

\section{An Simple Algorithm in the GO-EA Is Able to Train a ConvNet On MNIST}


\begin{figure}[h]
\vskip 0.2in
\begin{center}
\centerline{\includegraphics[width=\columnwidth]{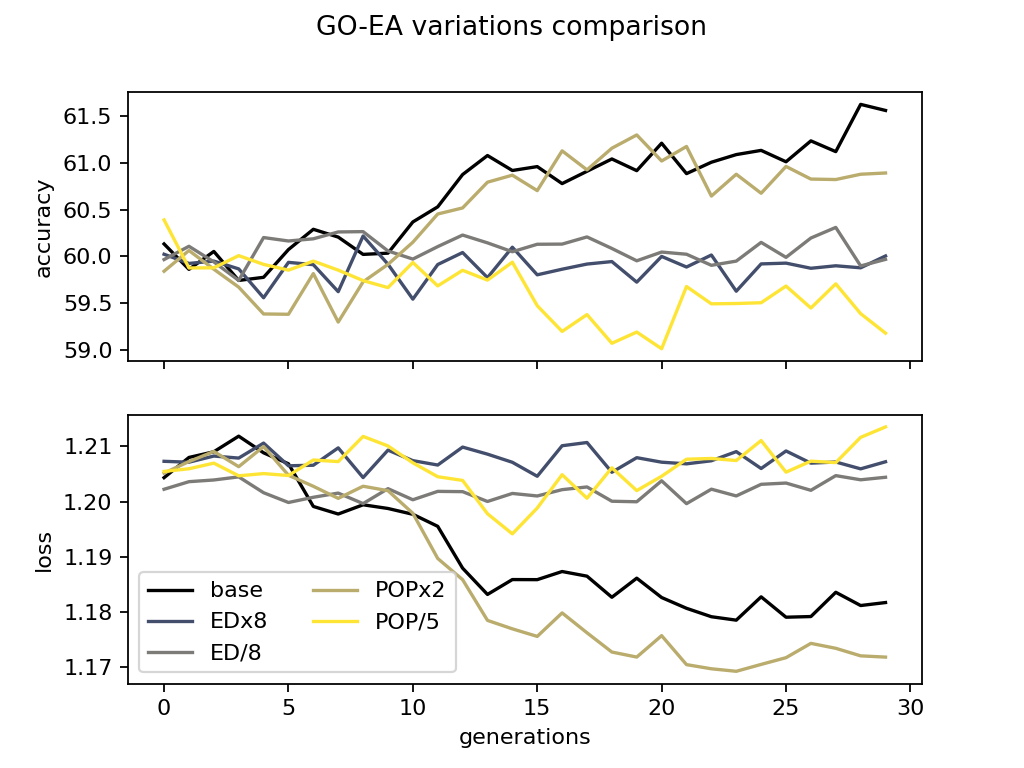}}
\caption{Variation of the speed of convergence of our base ConvNet on MNIST dataset based on the hyperparameters of our implementation of GO-EA class algorithm}
\label{go_ea_sweep}
\end{center}
\vskip -0.2in
\end{figure}

Given that optimized, tested, and scalable Evolutionary Algorithms in the GO-EA class have already been described and made available, notably by \citep{2017UberGeneticAlgos}, here we limited ourselves only to a proof-of-concept implementation. Specifically, due to the experimental infrastructure limitation, forcing a sequential sampling, we are sampling a population of 20 individuals per generation, evaluating the fitness (cross-entropy loss) on the whole training part of the MNIST dataset. Given the lack of optimization of code used to implement an instance of GO-EA class, each population sampling round takes slightly over a minute, leading to a slow - comparatively to SGD - performance. 

However, as Fig.\ref{go_ea_training} demonstrates, in about 800 updates, our GO-EA instance (edit distance ED:0.01, population POP:20) is able to take our 9854 parameters ConvNet from 10\% accuracy on MNIST to 60\% accuracy. In a single epoch with base hyperparameters, SGD performed over 1800 updates with an accuracy of only about 15\% on the training set and will need 1800 more to get to 85\% in the following epoch, albeit at a lower learning rate. Unlike SGD, our instance of GO-EA did not perform back-propagation, and the population is fully and efficiently scalable through random seed sharing, as explained by \citep{2017UberGeneticAlgos}.

While increasing the population size is a possible way to increase the speed of convergence, the behavior of other hyperparameters seems to be negligible, as a rapid exploration of parameter space over 30 generations near 60\% accuracy suggests in Appendix Fig.\ref{go_ea_sweep}.

\subsection{Model Validity}

\begin{figure}[t]
\centering
\includegraphics[width=0.9\columnwidth]{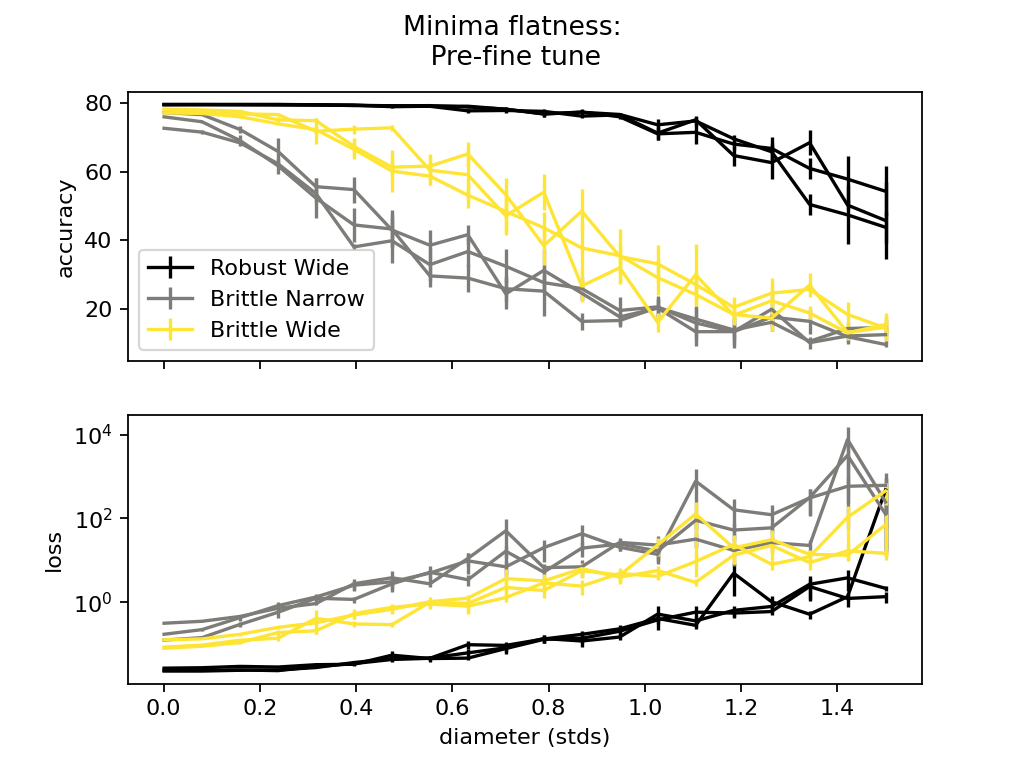} 
\caption{8 class minima flatness for archetype models. Due to how accuracy is calculated, the best-expected model performance is 80\%. Error bars derived from sampling replicas, lines are full independent replicas.}.
\label{flat_8_cl}
\end{figure}

The base model we used is able to achieve an error rate below 2\% on the validation set after 20 epochs of training and about 2.5\% in 10 epochs. Our approach to measuring the ruggedness of loss landscape and minima flatness was consistent with prior measures and recapitulated the results by \citet{LossLandscapesVis2018}, which we focus on due to its applicability in the non-differentiable setting. Specifically, flatter minima resulted from more overparametrized models, drop-out normalization, and smaller batches (cf Appendix Figs. \ref{flat_overparametrization}, \ref{flat_dropout}, \ref{flat_batch}). We also confirmed that the total batch number in training was not affecting the minima flatness (cf. Appendix Fig.\ref{flat_epochs}).

\newpage
\section{Minima Flatness Profiles}

In order to obtain minima flatness profiles, several models with identical hyperparameters are trained independently. As described by \citet{LossLandscapesVis2018}, we measure the minima flatness or the local smoothness by selecting a random vector from an N-dimensional Gaussian distribution with covariance defined by the Identity matrix scaled for the standard deviation of activations in a given layer. Once a vector is selected at random, the model stability is recorded along the axis of its extension, performing a "sweep" of the fitness landscape in the neighborhood of the model parameter vector $\theta$.

In the diagrams of the minima flatness we present, the $x$ axis is in standard deviations of activations. For completeness, we present the degradation of both loss and accuracy, and for precision, for each parameter vector, we sample at least 5 independent filter-wise normalized random vectors, resulting in error bars when the error at a given distance for them diverges (sampling replicas). Each line corresponds to a completely independent parameter vector from a model trained from scratch with the same hyperparameters (full replicas).

\newpage

\subsection{Dropout, Overparametrization, Small Batches, but Not Length of Training Contribute to Flatter Minima}

\begin{figure}[H]
\vskip 0.2in
\begin{center}
\centerline{\includegraphics[width=\columnwidth]{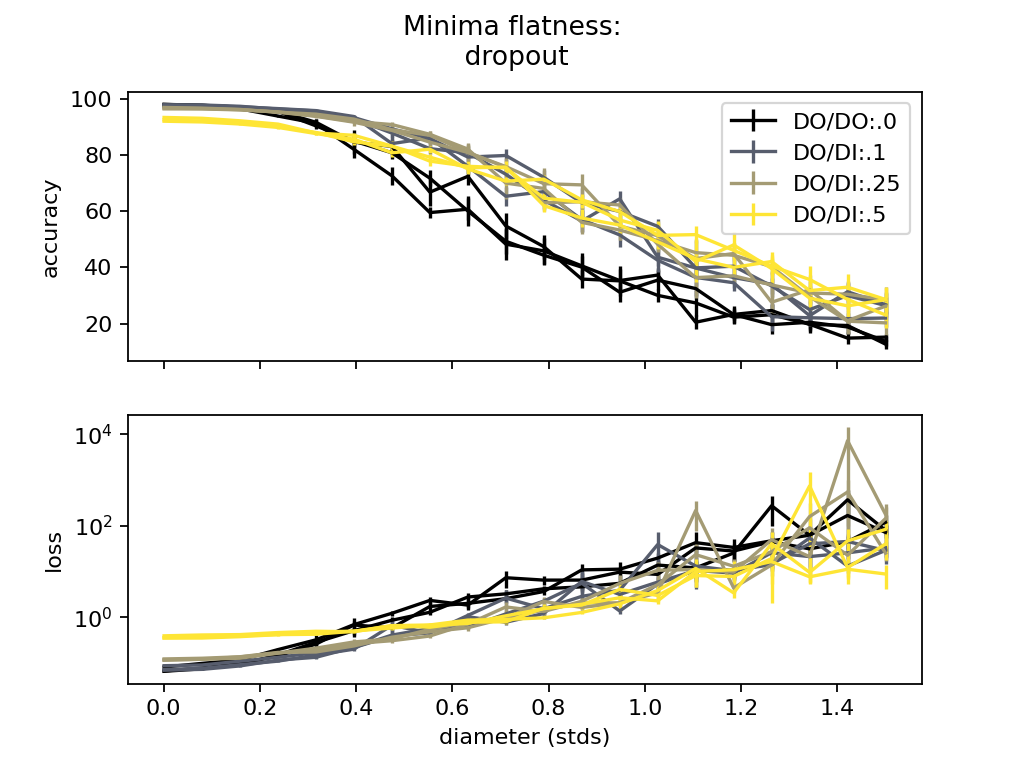}}
\caption{Effect of dropout and input dropout on minima flatness}
\label{flat_dropout}
\end{center}
\vskip -0.2in
\end{figure}

\begin{figure}[H]
\vskip 0.2in
\begin{center}
\centerline{\includegraphics[width=\columnwidth]{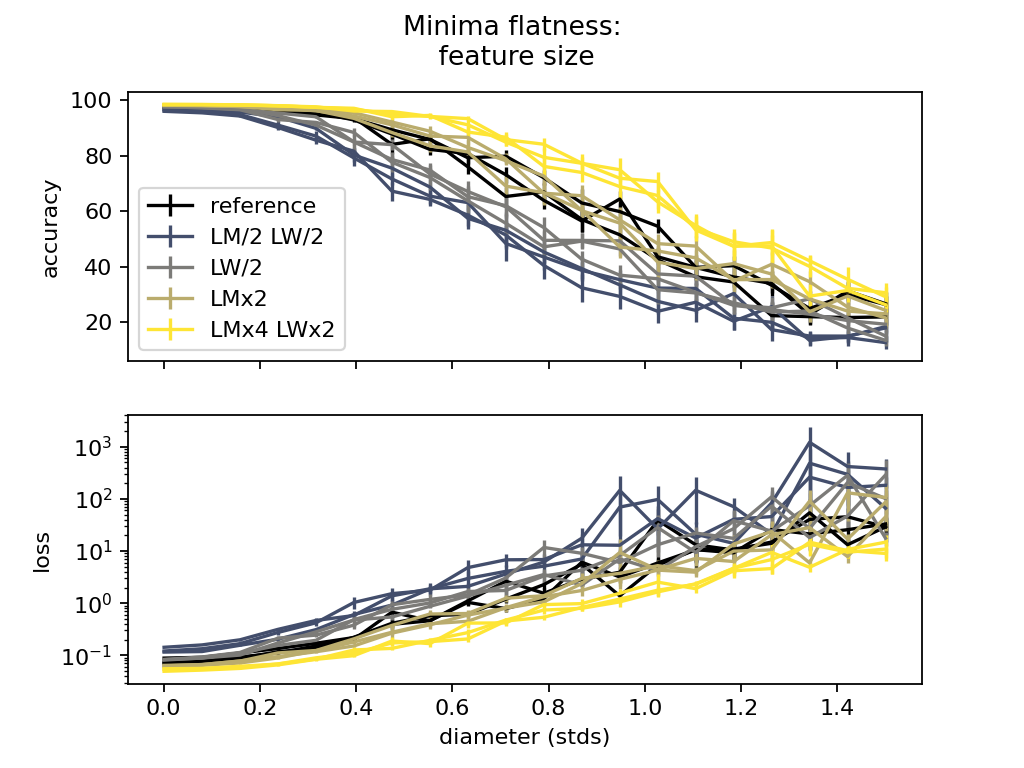}}
\caption{Effect of under- and over- parametrization of the ConvNet on the minima flatness}
\label{flat_overparametrization}
\end{center}
\vskip -0.2in
\end{figure}

\begin{figure}[H]
\vskip 0.2in
\begin{center}
\centerline{\includegraphics[width=\columnwidth]{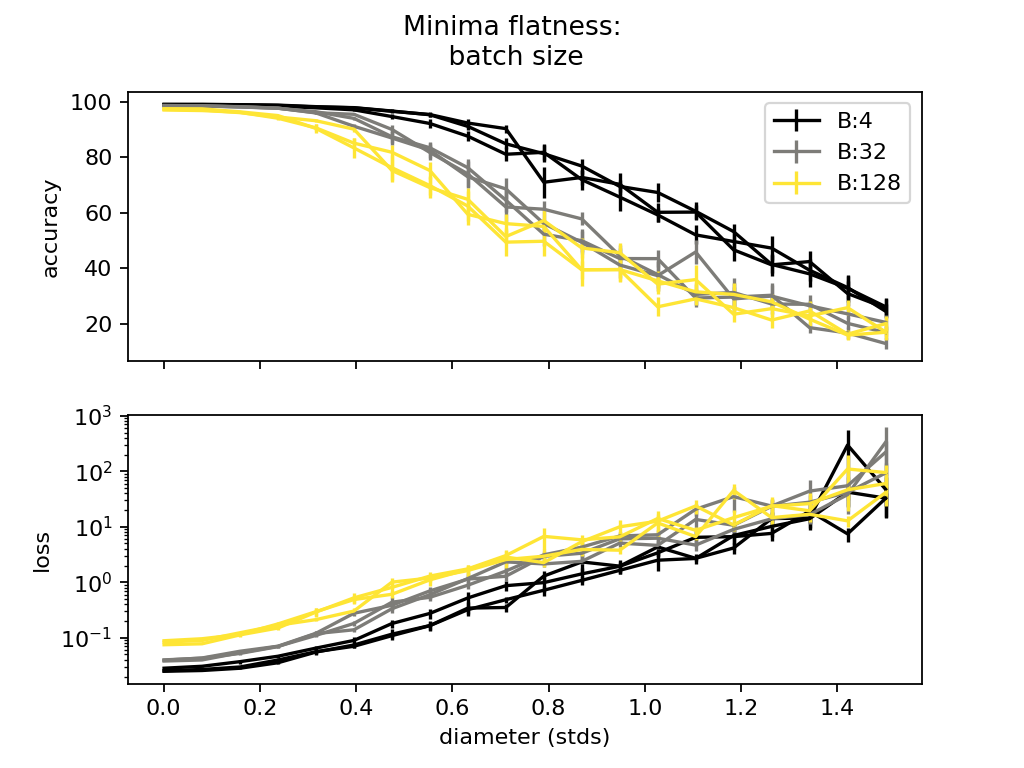}}
\caption{Effect of the small and large batch sizes on the minima flatness}
\label{flat_batch}
\end{center}
\vskip -0.2in
\end{figure}

\begin{figure}[H]
\vskip 0.2in
\begin{center}
\centerline{\includegraphics[width=\columnwidth]{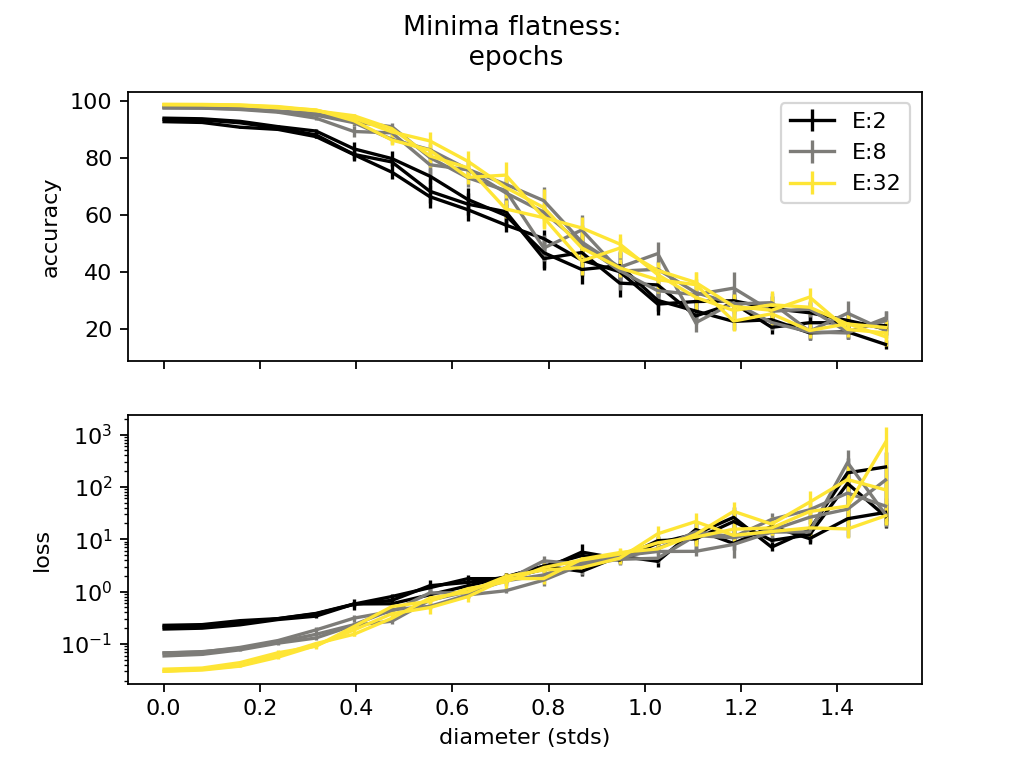}}
\caption{Larger number of updates with smaller batch size cannot explain flatter minima, given the length of training does not affect the minima flatness}
\label{flat_epochs}
\end{center}
\vskip -0.2in
\end{figure}

\newpage
\newpage
\subsection{Minima Flatness, Error Correction, Generalization, and Transfer Learning}

\begin{figure}[H]
\vskip 0.2in
\begin{center}
\centerline{\includegraphics[width=\columnwidth]{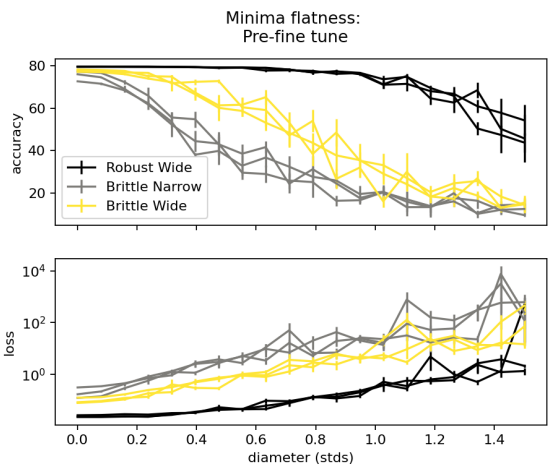}}
\caption{Minima flatness for 3 archetypes before transfer learning}
\label{flat_pre_fine_tune}
\end{center}
\vskip -0.2in
\end{figure}

\begin{figure}[H]
\vskip 0.2in
\begin{center}
\centerline{\includegraphics[width=\columnwidth]{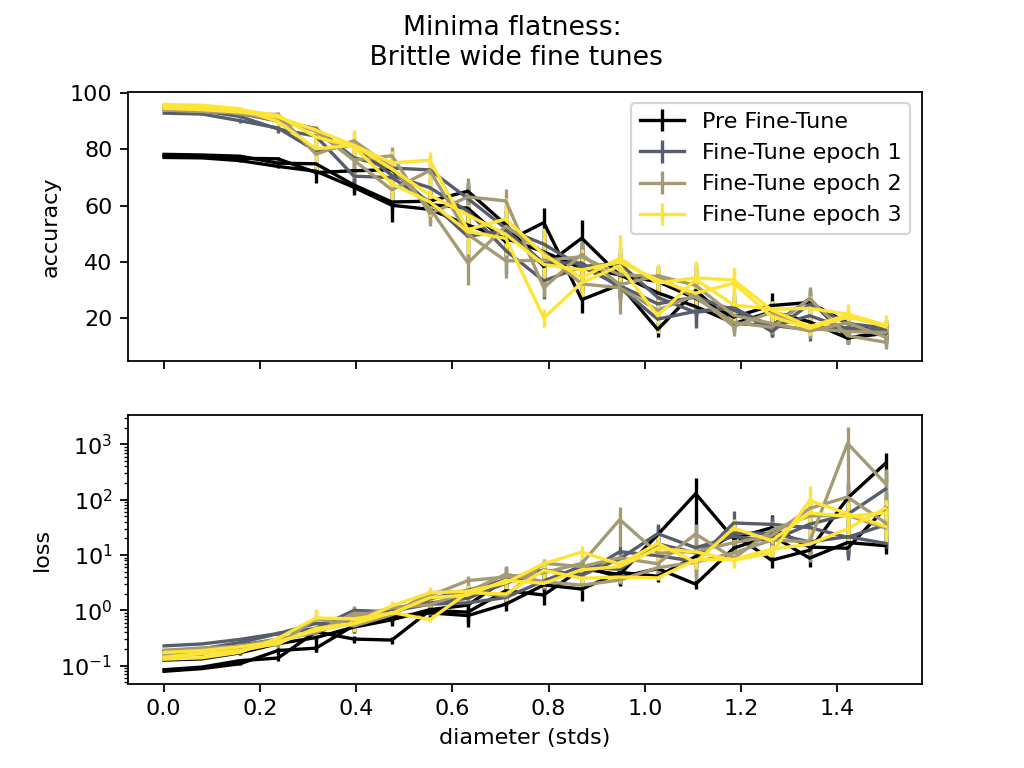}}
\caption{Change in minima flatness for the Brittle Wide archetype during transfer learning}
\label{flat_fine_tune_epochs}
\end{center}
\vskip -0.2in
\end{figure}

\begin{figure}[H]
\vskip 0.2in
\begin{center}
\centerline{\includegraphics[width=\columnwidth]{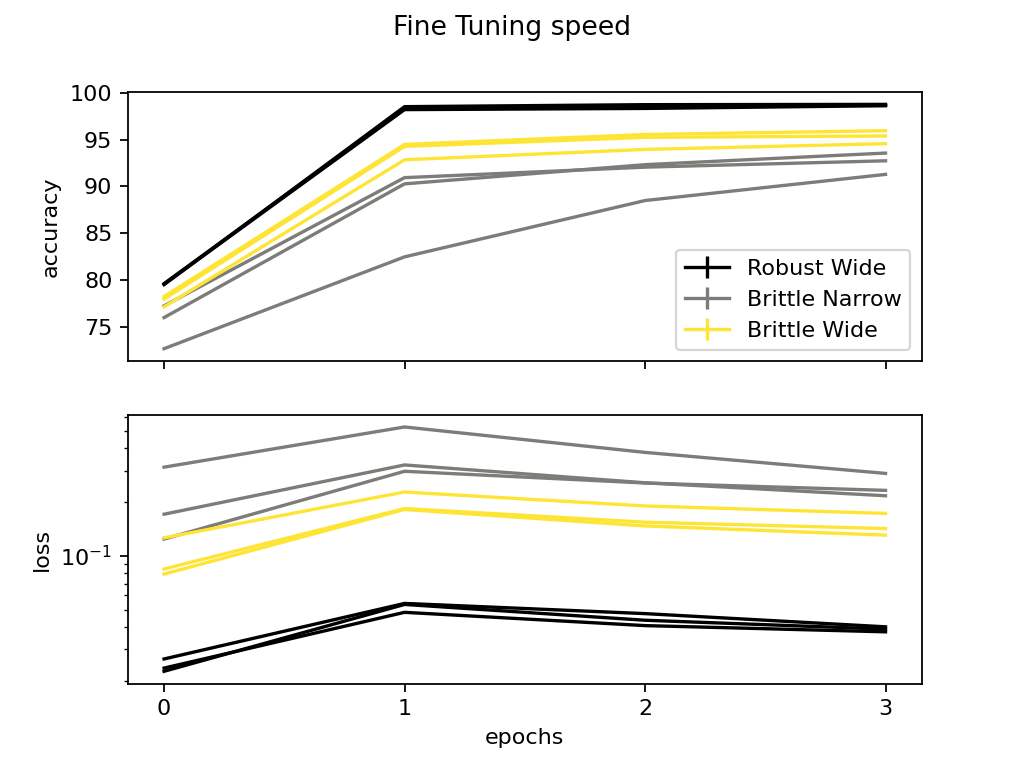}}
\caption{Change in accuracy and loss over epochs of transfer learning for the three archetype models. Minima flatness does not seem to affect transfer learning speed.  Each line is a full independent replicate.}
\label{flat_fine_tune_speed}
\end{center}
\vskip -0.2in
\end{figure}




\newpage

\section{Experimental Setup and Computation Time}

All the methods are evaluated on a workstation equipped with Intel Core i9-9900K (8 cores/16 threads CPU), 64 Gb of RAM clocked at 2666 MHz, 2 TB NVME M.2 SSD, and an RTX 3080 graphics cards, running an Ubuntu 20.04 LTS distribution. The evaluations were performed within a Docker container, Docker Community Edition, version 20.10.12. The code used Miniconda version 4.12.0 and Python 3.9 with CUDA version 11.3.1. More detailed information is available in the \verb|requirements.txt| found in the code repository shared with this Appendix, \href{https://github.com/chiffa/ALIFE2023_GOEA-SGD}{https://github.com/chiffa/ALIFE2023\_GOEA-SGD}.

The total compute time for re-running experiments, except for the GO-EA training trace, is under 2 hours. GO-EA, due to non-optimized deterministic random update vector derivation from seed, shuttles data between CPU and GPU for every individual in every generation. Due to the overhead introduced, the GO-EA trace presented in Fig.\ref{go_ea_training} took approximately 24 hours.

No hyperparameter optimization outside model versions presented here were presented. A common learning rate of 0.02 for training ConvNets on MNIST was divided by 10 to accommodate very small batches, resulting in 0.002. The same rate was also used for GO-EA training.

\end{document}